\acrodef{IoT}{Internet of Things}
\acrodef{BS}{base station}
\acrodef{pdf}{probability density function}
\acrodef{i.i.d.}{independent and identically distributed}
\acrodef{CDF}{cumulative distribution function}
\acrodef{FL}{federated learning}
\acrodef{ML}{machine learning}
\acrodef{AI}{artificial intelligent}
\acrodef{SGD}{stochastic gradient descent}
\acrodef{SG}{stochastic gradient}
\acrodef{MAC}{multiply-accumulate}
\acrodef{CNN}{convolutional neural network}
\acrodef{DNN}{deep neural network}
\acrodef{QNN}{quantized neural network}
\acrodef{OFDMA}{orthogonal frequency domain multiple access}
\acrodef{NAS}{neural architecture search}
\acrodef{PDG}{probable domain generalization}
\acrodef{CL}{continual learning}
\newtheorem{theorem}{Theorem}
\newtheorem{lemma}{Lemma}
\newtheorem{prop}{Proposition}   
\newtheorem{assumption}{Assumption}
\newcommand{\E}{\mathbb{E}}
\newcommand{\Prob}{\mathbb{P}}
\newcommand{\srisk}{\mathcal{R}}
\newcommand{\erisk}{\hat{\mathcal{R}}}
\newcommand{\ka}{\nonumber \\}
\newcommand{\mem}{\mathcal{M}_t}
\newcommand{\memsize}{|\mathcal{M}|}
\newcommand{\model}{\theta}
\newcommand{\env}{e}
\newcommand{\envwhole}{\mathcal{E}}
\newcommand{\loss}{l}
\newcommand{\scdf}{F_{\srisk}}
\newcommand{\ecdf}{F_{\erisk}}
\newcommand{\convex}{\lambda}
\newcommand{\lip}{L}
\newcommand{\balance}{\rho}
\newcommand{\modelmem}{\model_{\mem}^*}
\newcommand{\modelem}{\hat{\model}_{\mem}}
\newcommand{\batch}{\mathcal{B}}
\newcommand{\rbatch}{\mathcal{B}_{\srisk}}
\newcommand{\mbatch}{\mathcal{B}_{M}}
\newcommand{\batchsize}{|\batch|}
\title{Analysis of  the Memorization and Generalization \\ Capabilities of AI Agents: Are Continual Learners Robust? \vspace{-0.0cm}}
\name{Minsu Kim and Walid Saad \vspace{-0.0cm}
	\thanks{This research was supported by a grant from the Amazon-Virginia Tech Initiative for Efficient and Robust Machine Learning.}
	\thanks{The source code is available on https://github.com/news-vt}
}
\address{Wireless@VT, Bradley Department of Electrical and Computer Engineering, \\
Virginia Tech, Arlington, VA, USA.
\vspace{-0.0cm}}
\begin{document}
%
\maketitle
\begin{abstract}
	\vspace{-0.0cm}
In continual learning (CL), an AI agent (e.g., autonomous vehicles or robotics) learns from non-stationary data streams under dynamic environments. For the practical deployment of such applications, it is important to guarantee robustness to unseen environments while maintaining past experiences. In this paper, a novel CL framework is proposed to achieve robust generalization to dynamic environments while retaining past knowledge. The considered CL agent uses a capacity-limited memory to save previously observed environmental information to mitigate forgetting issues. Then, data points are sampled from the memory to estimate the distribution of risks over environmental change so as to obtain predictors that are robust with unseen changes. The generalization and memorization performance of the proposed framework are theoretically analyzed. This analysis showcases the tradeoff between memorization and generalization with the memory size. Experiments show that the proposed algorithm outperforms memory-based CL baselines across all environments while significantly improving the generalization performance on unseen target environments.
\end{abstract}
\vspace{-0.0cm}
\begin{keywords}
Robustness, Generalization, Memorization, Continual Learning
\vspace{-0.0cm}
\end{keywords}
\section{Introduction}
\vspace{-0.0cm}
\label{sec:intro}
\Ac{CL} recently emerged as a new paradigm for designing \ac{AI} systems that are adaptive and self-improving over time \cite{lopez2017gradient}. In \ac{CL}, \ac{AI} agents continuously learn from non-stationary data streams and adapt to dynamic environments. As such, \ac{CL} can be applied to many real-time \ac{AI} applications such as autonomous vehicles or digital twins \cite{omar}. For these applications to be effectively deployed, it is important to guarantee robustness to unseen environments while retaining past knowledge. However, modern deep neural networks often forget previous experiences after learning new information and struggle when faced with changes in data distributions \cite{eastwood2022probable}. Although \ac{CL} can mitigate forgetting issues, ensuring both memorization and robust generalization to unseen environments is still a challenging problem.


To handle forgetting issues in \ac{CL}, many practical approaches have been proposed using memory \cite{rolnick2019experience, buzzega2020dark, aljundi2019gradient, chaudhry2021using, sun2022information} and regularization \cite{schwarz2018progress, pourkeshavarzi2021looking}. For memory-based methods \cite{rolnick2019experience, buzzega2020dark, aljundi2019gradient, chaudhry2021using, sun2022information}, a memory is deployed to save past data samples and to replay them when learning new information. Regularization-based methods \cite{schwarz2018progress, pourkeshavarzi2021looking} use regularization terms during model updates to avoid overfitting the current environment. However, these approaches  \cite{rolnick2019experience, buzzega2020dark, aljundi2019gradient, chaudhry2021using, sun2022information, schwarz2018progress, pourkeshavarzi2021looking} did not consider or theoretically analyze generalization performance on unseen environments even though they targeted non-stationary data streams. 

Recently, a handful of works \cite{peng2023ideal, raghavan2021formalizing, lin2023theory, cai2021online, machireddy2022continual, guo2023out} studied the generalization of \ac{CL} agents. In \cite{peng2023ideal}, the authors theoretically analyzed the generalization bound of memory-based \ac{CL} agents. The work in \cite{raghavan2021formalizing} used game theory to investigate the tradeoff between generalization and memorization. In \cite{lin2023theory}, the authors analyzed generalization and memorization under overparameterized linear models. However, the works in \cite{peng2023ideal, raghavan2021formalizing, lin2023theory} require that an \ac{AI} agent knows when and how task/environment identities, e.g., labels, will change. In practice, such information is usually unavailable and unpredictable. Hence, we focus on more general \ac{CL} settings \cite{buzzega2020dark} without such assumption. Meanwhile, the works in \cite{cai2021online, machireddy2022continual, guo2023out} empirically analyzed generalization under general \ac{CL} settings. However, they did not provide a theoretical analysis of the tradeoff between generalization and memorization performance.

The main contribution of this paper is a novel \ac{CL} framework that can achieve robust generalization to dynamic environments while retaining past knowledge. In the considered framework, a \ac{CL} agent deploys a capacity-limited memory to save previously observed environmental information. Then, a novel optimization problem is formulated to minimize the worst-case risk over all possible environments to ensure the robust generalization while balancing the memorization over the past environments.  However, it is generally not possible to know the change of dynamic environments, and deriving the worst-case risk is not feasible. To mitigate this intractability, the problem is relaxed with probabilistic generalization by considering risks as a random variable over environments. Then, data points are sampled from the memory to estimate the distribution of risks over environmental change so as to obtain predictors that are robust with unseen changes. We then provide theoretical analysis about the generalization and memorization of our framework with new insights that a tradeoff exists between them in terms of the memory size. Experiments show that our framework can achieve robust generalization performance for unseen target environments while retaining past experiences. The results show up to a 10\% gain in the generalization compared to memory-based \ac{CL} baselines.

\section{System Model}
\vspace{-0.0cm}
\label{sec:problem}

\subsection{Setup}
\label{subsec:setup}

Consider a single \ac{CL} agent equipped with \ac{AI} that performs certain tasks observing streams of data sampled from a dynamic environment. The agent is embedded with a \ac{ML} model parameterized by $\model \in \Theta \subset \mathbb{R}^d$, where $\Theta$ is a set of possible parameters and $d  > 0$. We assume that the agent continuously performs tasks under dynamic environments and updates its model. To perform a task, at each time $t$, the agent receives a batch of data $\batch = \{(X^{e_t}_i, Y^{e_t}_i)\}_{i=1}^{\batchsize}$ with input-output pairs $(X^{e_t}_i, Y^{e_t}_i)$ sampled/observed from a joint distribution $P(X^{e_t}, Y^{e_t})$ under the current environment $\env_t \sim \envwhole$, where $\envwhole$ represents the set of all environments. We assume that there exists a probability distribution $\mathcal{Q}$ over environments in $\envwhole$ \cite{eastwood2022probable}. $\mathcal{Q}$ can for example represent a distribution over changes to weather or cities for autonomous vehicles. It can also model a distribution over changes to rotation, brightness, or noise in image classification tasks. Hence, it is important for the agent to have robust generalization to such dynamic changes and to memorize past experiences. To this end, we use a memory $\mem$ of limited capacity $0 \leq |\mem| \leq \memsize$  so that the agent can save the observed data samples $\{(X^{e_t}_i, Y^{e_t}_i)\}_{i=1}^{\batchsize}$ in $\mem$ and can replay them when updating its model $\model_t$.

To measure the performance of $\model_t$ under the current environment $\env_t$, we  consider the statistical risk $\srisk^{\env_t}(\model_t)$ $= \E_{P(X^{e_t}, Y^{e_t})}$$[l(\theta_t(X^{e_t}),$ $Y^{e_t} )]$, where $\loss(\cdot)$ is a loss function, e.g., cross-entropy loss. Since we usually do not know the distribution $P(X^{e_t}, Y^{e_t})$, we also consider the empirical risk $\erisk^{\env_t}(\model_t) = \frac{1}{\batchsize} \sum_{i=1}^{\batchsize} \loss(\model_t(X^{e_t}), Y^{e_t}))$. 

\subsection{Problem Formulation}
\label{subsec:problem}
As the agent continuously experiences new environments, it is important to maintain the knowledge of previous environments (\emph{memorization}) and generalize robustly to any unseen environments (\emph{generalization}). This objective can be formulated into the following optimization problem:
\begin{align}
	\min_{\model_t \in \Theta} \ \ \ \ \sum_{\tau \in \mem} \frac{1}{|\mem|} \erisk^{\env_\tau} (\model_t) + \max_{\env_t \sim \envwhole} \balance \srisk^{\env_t}(\model_t) , \label{first_prob}
\end{align}
where $|\mem|$ is the size of the current memory and $\balance > 0 $ is a coefficient that balances between the terms in \eqref{first_prob}. The first term is the \emph{memorization performance} of the current model $\model_t$ with respect to the past experienced environments $\env_\tau, \forall \tau \in [1, \dots, |\mem|],$ at time $t$. The second term corresponds to the \emph{worst-case performance} of $\model_t$ to any possible environment $\env_t \in \envwhole$. Since the change of environments is dynamic and not predictable, we consider all possible cases to measure the robustness of the current model. This problem is challenging because the change of $\env_t$ is unpredictable and its information, e.g., label, is not available. Meanwhile, we only have limited access to the data from the observation in $\mem$. 

Since each environment follows a probability distribution $\mathcal{Q}$, $\srisk^{\env_t}$ can be considered as a random variable. Then, we rewrite \eqref{first_prob} as follows
\begin{align}
	&\min_{\model_t \in \Theta, \gamma \in \mathcal{R}} \ \ \ \ \sum_{\tau \in \mem} \frac{1}{|\mem|} \erisk^{\env_\tau} (\model_t) + \gamma \balance, \label{sec_prob}   \\
	&\quad \ \text{s.t.} \quad \quad \quad \Prob[\srisk^{\env_t} (\model_t) \leq \gamma ] = 1, 
	\label{sec_const}
\end{align}
where the probability in \eqref{sec_const} considers the randomness in $\env_t \sim \mathcal{Q}$. However, the problem is still challenging because constraint \eqref{sec_const} must be always satisfied. This can be too restrictive in practice due to the inherent randomness in training (e.g., environmental changes). To make the problem more tractable, we use the framework of \ac{PDG}\cite{eastwood2022probable}. In \ac{PDG} , we relax constraint \eqref{sec_const} with probability $\alpha \in (0, 1)$ as below
\begin{align}
	&\min_{\model_t \in \Theta, \gamma \in \mathcal{R}} \ \ \ \ \sum_{\tau \in \mem} \frac{1}{|\mem|} \erisk^{\env_\tau} (\model_t) + \gamma \balance, \label{third_prob}   \\
	&\quad \ \text{s.t.} \quad \quad \quad \Prob[\srisk^{\env_t} (\model_t) \leq \gamma ] \geq \alpha.
	\label{third_const}
\end{align}
Hence, constraint \eqref{third_const} now requires the risk of $\model_t$ to be lower than $\gamma$ with probability at least $\alpha \in (0, 1)$. However, $\mathcal{Q}$ is generally unknown, so the probability term in \eqref{third_const} is still intractable. Since risk $\srisk^\env (\cdot)$ is a random variable, we can consider a certain probability distribution $f_\srisk$ of risks over environment $\env \sim \envwhole$ \cite{eastwood2022probable}. Here, $f_\srisk$ can capture the sensitivity of $\model$ to different environments. Then, we can rewrite the problem by using the \ac{CDF} $\scdf$ of $f_\srisk$ as:
\begin{align}
	&\min_{\model_t \in \Theta} \ \ \ \ \sum_{\tau \in \mem} \frac{1}{|\mem|} \erisk^{\env_\tau} (\model_t) + \scdf^{-1}(\alpha; \model_t) \balance, \label{fourth_prob}   
\end{align}
where $\scdf^{-1}(\alpha; \model_t) = \inf\{ \gamma: \Prob [\srisk^{\env_t} (\model_t) \leq \gamma] \geq \alpha \}$. Now, to estimate $\scdf$ through its empirical version $\ecdf$, we sample a batch of data $\rbatch$ from $\mem$ and another batch $\batch$ from $\env_t$. Since $\scdf$ is an unknown distribution, we can use kernel density estimation or Gaussian estimation \cite{eastwood2022probable} for $\scdf$ using the sampled data. This approach is similar to minimizing empirical risks instead of statistical risks in conventional training settings \cite{shalev2009stochastic}. For the computational efficiency, we also sample a batch $\mbatch$ from $\mem$ to approximate the performance of $\model_t$ over the memory. We then obtain the following problem
\begin{align}
&\min_{\model_t \in \Theta} \ \ \ \ \sum_{\tau \in \mbatch} \frac{1}{|\mbatch|} \erisk^{\env_\tau} (\model_t) + \ecdf^{-1}(\alpha; \model_t) \balance. \label{final_prob}   
\end{align}

The above problem \eqref{final_prob} only uses data sampled from $\mem$ and $\env_t$, thereby mitigating the intractability due to the unknown distribution in \eqref{fourth_prob}. Hence, \eqref{final_prob} can be solved using gradient-based methods after the distribution estimation. We summarize our method in Algorithm \ref{algorithm1} with Gaussian estimation. Since we use data samples in $\mem$ to estimate problem \eqref{fourth_prob}, the size of $\mem$ naturally represents the richness of estimation $\ecdf$. As we have a larger memory size, we can save more environmental information and can achieve more robust generalization. However, a large $|\mem|$ also means that an agent has more information to memorize. Finding a well-performing model across all environments in $\mem$ is generally a challenging problem \cite{knoblauch2020optimal}. To capture this tradeoff between the generalization and memorization, next, we analyze the impact of the memory size $|\mem|$ on the memorization and generalization performance.

\begin{algorithm} [t!] 
	\caption{Proposed Algorithm } \label{algorithm1}
	\KwInput{Model $\model$, probability of generalization $\alpha$, learning rate $\eta$, memory $\mem$, \ac{CDF} of normal distribution $\Phi(\cdot)$, batches $\batch, \rbatch, \mbatch$, and balance coefficient $\balance$.} 
	\For{$t = 0$ to $T-1$}{
		Sample a batch of data $\batch$ from $\env_t$;\\
		Calculate the mean $\mu_t$ and vairance $\sigma^2_t$ of risks $\erisk(\model_t)$ using sampled datasets in $\batch$ and $\rbatch$ ;\\
		Compute $\alpha$ quantile of the estimated Gaussian distribution $L_G \leftarrow \mu_t + \sigma_t^2 \Phi^{-1}(\alpha)$;\\
		Calculate $L_M\leftarrow \sum_{\tau \in \mbatch} \frac{1}{|\mbatch|} \erisk^{\env_\tau} (\model_t)$ using sampled data in $\mbatch$;\\
		Update $\theta_t \leftarrow \theta_t - \eta \nabla_{\model_t} (\rho L_G + L_M)$
	}
\end{algorithm}

\section{Tradeoff between Memorization and Generalization}
\label{sec:theory}
We now study the impact of the memory size $|\mem|$ on the memorization.  
Motivated by \cite{peng2023ideal}, we assume that a global solution $\model_t^*$ exists for every environment $\tau \in [1, \dots, |\mem|]$ at time $t$ such that $\model_t^* = \arg\min_{\model \in \Theta} \sum_{\tau=1}^{|\mem|} \srisk^{\env_\tau}(\theta).$ If such $\model_t^*$ does not exist, memorization would not be feasible. Then, we can have the following theorem.
\begin{theorem} \label{thm1}
For time $t$, let $\modelmem$ be a global solution for all environments $\tau \in [1, \dots, |\mem|]$ in $\mem$ and suppose loss function $\loss (\cdot)$ to be $\convex$-strongly convex and $\lip$-Lipschitz-continuous. Then, for the current model $\model_t \in \Theta$ and $\epsilon > 0$, we have
\begin{align}
	&\Prob 
	\Bigg[
	\bigcap_{\tau = 1}^{|\mem|} 
	\left\{
	\srisk^{\env_\tau} (\model_t) - \srisk^{\env_\tau}(\modelmem) \leq \epsilon 
	\right\}
	\Bigg] \ka
	& \quad \geq 1 - \frac{4|\mem| \lip^2}{\convex|\mbatch|
		\Big(
		\epsilon - \sqrt{
			2\lip^3 ||\model_t - \modelem ||/\convex
		}
		\Big)
	},	
\end{align}
where $\modelem$ is the empirical solution for all environments $\tau \in [1, \dots, |\mem|]$.
\end{theorem}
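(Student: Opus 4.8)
\emph{Proof plan.} The plan is to pass to the complementary event, apply a union bound over the $\memsize$ environments stored in $\mem$, and then bound each per-environment excess risk $\srisk^{\env_\tau}(\model_t)-\srisk^{\env_\tau}(\modelmem)$ by splitting it into a deterministic \emph{optimization-bias} term controlled by the distance $\|\model_t-\modelem\|$ from the current iterate to the empirical memory minimizer, and a random \emph{statistical} term equal to the excess risk of $\modelem$ with respect to $\modelmem$; Markov's inequality together with a strongly-convex ERM stability bound then closes the argument. Intuitively, the factor $\memsize$ comes from the union bound, the $\tfrac{\lip^2}{\convex|\mbatch|}$ factor from ERM stability over the $|\mbatch|$ samples drawn from $\mem$, and the $\sqrt{2\lip^3\|\model_t-\modelem\|/\convex}$ correction from the bias term.

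First I would write $\Prob\big[\bigcap_{\tau=1}^{\memsize}\{\srisk^{\env_\tau}(\model_t)-\srisk^{\env_\tau}(\modelmem)\le\epsilon\}\big]=1-\Prob\big[\bigcup_{\tau=1}^{\memsize}\{\srisk^{\env_\tau}(\model_t)-\srisk^{\env_\tau}(\modelmem)>\epsilon\}\big]\ge 1-\sum_{\tau=1}^{\memsize}\Prob\big[\srisk^{\env_\tau}(\model_t)-\srisk^{\env_\tau}(\modelmem)>\epsilon\big]$, so it suffices to bound each tail term by $\tfrac{4\lip^2}{\convex|\mbatch|(\epsilon-\sqrt{2\lip^3\|\model_t-\modelem\|/\convex})}$. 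Fixing $\tau$ and using that each risk $\srisk^{\env_\tau}$ is an expectation of $\loss$ and hence inherits $\convex$-strong convexity and $\lip$-Lipschitz continuity, I decompose $\srisk^{\env_\tau}(\model_t)-\srisk^{\env_\tau}(\modelmem)=\big[\srisk^{\env_\tau}(\model_t)-\srisk^{\env_\tau}(\modelem)\big]+\big[\srisk^{\env_\tau}(\modelem)-\srisk^{\env_\tau}(\modelmem)\big]$. For the first bracket, expanding strong convexity at $\model_t$ gives $\srisk^{\env_\tau}(\modelem)\ge\srisk^{\env_\tau}(\model_t)+\langle\nabla\srisk^{\env_\tau}(\model_t),\modelem-\model_t\rangle+\tfrac{\convex}{2}\|\modelem-\model_t\|^2$, and since $\|\nabla\srisk^{\env_\tau}(\model_t)\|\le\lip$ this yields $\srisk^{\env_\tau}(\model_t)-\srisk^{\env_\tau}(\modelem)\le\lip\|\model_t-\modelem\|-\tfrac{\convex}{2}\|\model_t-\modelem\|^2\le\sqrt{2\lip^3\|\model_t-\modelem\|/\convex}$, where the last step is the elementary one-variable bound $\lip u-\tfrac{\convex}{2}u^2\le\sqrt{2\lip^3u/\convex}$ valid for all $u\ge 0$. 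Writing $\delta:=\sqrt{2\lip^3\|\model_t-\modelem\|/\convex}$ for this deterministic bias and conditioning on $\model_t$ and $\modelem$ (the remaining randomness being the draw of $\mbatch$ from $\mem$), I obtain the inclusion $\{\srisk^{\env_\tau}(\model_t)-\srisk^{\env_\tau}(\modelmem)>\epsilon\}\subseteq\{\srisk^{\env_\tau}(\modelem)-\srisk^{\env_\tau}(\modelmem)>\epsilon-\delta\}$.

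Assuming $\epsilon>\delta$, Markov's inequality gives $\Prob[\srisk^{\env_\tau}(\modelem)-\srisk^{\env_\tau}(\modelmem)>\epsilon-\delta]\le\E[\srisk^{\env_\tau}(\modelem)-\srisk^{\env_\tau}(\modelmem)]/(\epsilon-\delta)$, the quantity being non-negative since, by the standing assumption preceding the theorem, $\modelmem$ is the reference optimum for the memory environments. It then remains to show $\E[\srisk^{\env_\tau}(\modelem)-\srisk^{\env_\tau}(\modelmem)]\le\tfrac{4\lip^2}{\convex|\mbatch|}$, which is the classical excess-risk / algorithmic-stability estimate for ERM with a $\convex$-strongly-convex, $\lip$-Lipschitz loss over $|\mbatch|$ samples, exactly the tool used in \cite{peng2023ideal}. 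Substituting this bound and $\delta$ back through the union bound yields $1-\memsize\cdot\tfrac{4\lip^2}{\convex|\mbatch|(\epsilon-\delta)}=1-\tfrac{4\memsize\lip^2}{\convex|\mbatch|(\epsilon-\sqrt{2\lip^3\|\model_t-\modelem\|/\convex})}$, which is the claimed bound.

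The step I expect to be the main obstacle is the statistical term: making the ERM stability bound $\E[\srisk^{\env_\tau}(\modelem)-\srisk^{\env_\tau}(\modelmem)]\le 4\lip^2/(\convex|\mbatch|)$ fully rigorous requires specifying precisely what a ``sample from $\mem$'' is (an individual data point versus a whole stored environment) and verifying the i.i.d./exchangeability structure of $\mbatch$, and --- most delicately --- transferring the classical bound, which is stated relative to the minimizer of the \emph{same} risk, to the gap between $\modelem$ (empirical minimizer of the aggregated memory risk) and $\modelmem$ (its population counterpart) evaluated on an \emph{individual} environment risk $\srisk^{\env_\tau}$; this is precisely where the assumption that $\modelmem$ is simultaneously (near-)optimal for every environment in $\mem$ must be invoked, and without it one would pick up an extra $\memsize$ factor. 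Secondary and routine points are the bookkeeping around conditioning on $\|\model_t-\modelem\|$, the hypothesis $\epsilon>\delta$ needed for the bound to be non-vacuous, and the verification of the elementary inequality $\lip u-\tfrac{\convex}{2}u^2\le\sqrt{2\lip^3u/\convex}$.
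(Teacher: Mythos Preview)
Your proposal is correct and reaches the stated bound, but it proceeds along a somewhat different decomposition than the paper's own proof, so a brief comparison is worthwhile.

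The paper does not split $\srisk^{\env_\tau}(\model_t)-\srisk^{\env_\tau}(\modelmem)$ additively. Instead it invokes, as a black box, Theorem~5 of \cite{shalev2009stochastic}, which already packages the bias and the statistical fluctuation together: with probability at least $1-\delta$,
\[
\srisk^{\env_\tau}(\model_t)-\srisk^{\env_\tau}(\modelmem)\;\le\;\sqrt{\tfrac{2\lip^2}{\convex}\bigl(\erisk^{\env_\tau}(\model_t)-\erisk^{\env_\tau}(\modelem)\bigr)}\;+\;\tfrac{4\lip^2}{\delta\convex|\mbatch|}.
\]
The paper then bounds the \emph{empirical} gap inside the square root by $\lip\|\model_t-\modelem\|$ via Lipschitz continuity, obtaining the same bias term $\sqrt{2\lip^3\|\model_t-\modelem\|/\convex}$ that you derive, applies the intersection inequality $\Prob[\cap_\tau A_\tau]\ge\sum_\tau\Prob[A_\tau]-(|\mem|-1)$ (equivalent to your union bound on the complement), and finally solves for $\delta$ in terms of $\epsilon$. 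By contrast, you decompose at the level of the \emph{population} risk, bound the optimization piece $\srisk^{\env_\tau}(\model_t)-\srisk^{\env_\tau}(\modelem)$ deterministically via strong convexity and the ad hoc inequality $\lip u-\tfrac{\convex}{2}u^2\le\sqrt{2\lip^3 u/\convex}$, and handle the statistical piece $\srisk^{\env_\tau}(\modelem)-\srisk^{\env_\tau}(\modelmem)$ by Markov plus the ERM stability expectation bound $\E[\cdot]\le 4\lip^2/(\convex|\mbatch|)$. Both routes land on the same $\delta=\tfrac{4\lip^2}{\convex|\mbatch|(\epsilon-\sqrt{2\lip^3\|\model_t-\modelem\|/\convex})}$; the paper's is shorter because the cited lemma absorbs the work, while yours is more self-contained and makes explicit exactly the point you flag as the main obstacle---namely that applying the per-environment stability bound with $\modelmem$ in place of the environment-specific optimum relies on the standing assumption that $\modelmem$ is simultaneously optimal for every $\tau$ in $\mem$. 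The paper uses that assumption in the same place but less visibly, simply by instantiating the lemma with $\model^*=\modelmem$ for each $\tau$.
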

\begin{proof}
	We first state one standard lemma used in the proof as below
	\begin{lemma} \label{lem1}
		(From \cite[ Theorem 5]{shalev2009stochastic}) For $\model \in \Theta$, a certain environment $\tau$, and its optimal solution $\model^*$, with probability at least $1-\delta$, we have
		\begin{align}
			\Prob \hspace{-0.5mm}
			\underbrace{ \bigg[ \hspace{-0.5mm}
			\srisk^{\env_\tau} \hspace{-0.5mm} (\hspace{-0.1mm}\model\hspace{-0.1mm}) \hspace{-0.7mm} - \hspace{-0.7mm}\srisk^{\env_\tau} \hspace{-0.5mm}(\hspace{-0.1mm} \model^*\hspace{-0.1mm}) \hspace{-0.3mm} \leq \hspace{-0mm} \sqrt{ \hspace{-0.5mm} \frac{2\lip^2}{\convex} (\hspace{-0.0mm} \erisk^{\env_\tau} \hspace{-0.5mm} (\hspace{-0.3mm}\model\hspace{-0.3mm})\hspace{-0.7mm} - \hspace{-0.7mm} \erisk^{\env_\tau} \hspace{-0.5mm} (\hspace{-0.1mm} \hat{\model} \hspace{-0.1mm})  \hspace{-0.0mm} ) }    \hspace{-0.7mm} + \hspace{-0.7mm} \frac{4\lip^2}{\delta \convex |\mbatch|} \hspace{-0mm}
			\bigg]}_{A} \hspace{-0.5mm}
			\geq \hspace{-0.5mm} 1 \hspace{-0.7mm} - \hspace{-0.7mm} \delta.
		\end{align}
	\end{lemma}
From the $\lip$-Lipschtiz assumption, we have the following inequality
		\begin{align}
	\Prob \hspace{-0.7mm}
	\left[ \hspace{-0.7mm}
	\srisk^{\env_\tau} \hspace{-0.5mm} (\model) \hspace{-0.7mm} - \hspace{-0.7mm}\srisk^{\env_\tau} \hspace{-0.5mm}(\model^*) \hspace{-0.7mm} \leq \hspace{-1.0mm} \sqrt{\frac{2\lip^2}{\convex}} \hspace{-0.7mm} \sqrt{ \lip ||\model - \hat{\model} ||}  \hspace{-0.7mm} + \hspace{-0.7mm} \frac{4\lip^2}{\delta \convex |\mbatch|} \hspace{-0.7mm}
	\right] \hspace{-0.9mm}
	\geq A .
\end{align}
For time $t$, since $\modelmem$ is a global solution for all $\tau \in [1, \dots, |\mem|]$, the following holds
\begin{align}
		&\Prob \hspace{-0.7mm} 
	\bigg[ \hspace{-0.5mm} \bigcap_{\tau = 1}^{|\mem|} \hspace{-1.0mm}
	\bigg\{    \hspace{-0.7mm}
	\srisk^{\env_\tau} \hspace{-0.5mm} (\hspace{-0.3mm} \model_t\hspace{-0.3mm} ) \hspace{-0.7mm} - \hspace{-0.7mm}\srisk^{\env_\tau} \hspace{-0.5mm}(\hspace{-0.3mm} \modelmem\hspace{-0.3mm} ) \hspace{-0.7mm} \leq \hspace{-1.0mm} \sqrt{\frac{2 \hspace{-0.5mm} \lip^2}{\convex}} \hspace{-0.7mm} \sqrt{ \hspace{-0.7mm} \lip ||\hspace{-0.3mm} \model_t \hspace{-0.7mm} -  \hspace{-0.7mm} \modelem \hspace{-0.3mm} ||}  \hspace{-0.7mm} + \hspace{-0.7mm} \frac{4\lip^2}{\delta \convex |\mbatch|} \hspace{-0.7mm}
	\bigg\} \hspace{-0.7mm}
	\bigg] \label{righthand} \\
	&\geq
	\sum_{\tau=1}^{|\mem|}\bigg[
	\srisk^{\env_\tau} \hspace{-0.5mm} (\model_t) \hspace{-0.7mm} - \hspace{-0.7mm}\srisk^{\env_\tau} \hspace{-0.5mm}(\modelmem) \hspace{-0.7mm} \leq \hspace{-1.0mm} \sqrt{\frac{2\lip^2}{\convex}} \hspace{-0.7mm} \sqrt{ \lip ||\model_t - \modelem ||}  \ka
	 &\quad + \frac{4\lip^2}{\delta \convex |\mbatch|} \hspace{-0.7mm}
	\bigg] - (|\mem| -1) \ka
	&\geq |\mem| (1-\delta) - (|\mem| - 1) = 1 - |\mem|\delta, \label{plug}
\end{align}
where the first inequality results from $\Prob [\bigcap_{i=1}^{n} A_i] \geq \sum_{i=1}^{n} \Prob[A_i]$ $- (n-1)$ and the second inequality is from the Lemma \ref{lem1}. We set the right-hand side of \eqref{righthand} to $\epsilon$ and the expression of $\delta$ as 
\begin{align}
    \delta \hspace{-0.0mm} = \hspace{-0.0mm} \frac{4\lip^2}{\convex|\mbatch| 
	\big(
	\epsilon - \sqrt{
	2\lip^3 ||\model_t - \modelem ||/\convex
    }
	\big)
    }.	
\end{align}
By plugging the derived $\delta$ into \eqref{plug}, we complete the proof.
\end{proof}
From Theorem \ref{thm1}, we observe that as the memory size $|\mem|$ increases, the probability that the difference between risks of $\model_t$ and $\modelmem$ is smaller than $\epsilon$ decreases. As we have more past experience in $\mem$, it becomes more difficult to achieve a global optimum for all experienced environments. For instance, if we have only one environment in $\mem$, finding $\model_{\mem}^*$ will be trivial. However, as we have multiple environments in $\mem$, $\model_t$ will more likely deviate from $\model^*_{\mem}$.

Next, we present the impact of the memory size $|\mem|$ on the generalization performance. 
\begin{prop} \label{prop1}
	Let $\mathcal{\hat{F}}_t$ denote the space of possible estimated risk distributions over $|\mem|$ environments and $\mathcal{N}_\epsilon(\mathcal{\hat{F}}_t)$ be the $\epsilon$ covering number of $\mathcal{\hat{F}}_t$ for $\epsilon > 0$ at time $t$. Then, for $\alpha \in (0, 1)$, we have the following 
	\begin{align}
		&\Prob 
		\left[ \sup_{\model_t \in \Theta}
		\scdf^{-1}(\alpha - \text{Bias}(\model_t, \erisk)) - \ecdf^{-1}(\alpha) > \epsilon 
		\right] \ka
		&\quad \leq \mathcal{O}(\mathcal{N}_{\epsilon/16}(\mathcal{\hat{F}}_t) \exp(-\frac{|\mem| \epsilon^2}{16}) )  ,
	\end{align}
where $ \text{Bias}(\model_t, \hspace{-0.7mm} \erisk)) \hspace{-0.7mm}= \hspace{-0.7mm} \sup_{\model_t \in \Theta, \gamma \in \mathcal{R}} \hspace{-0.7mm} \scdf \hspace{-0.5mm} (\hspace{-0.3mm} \model_t \hspace{-0.3mm}) - \E_{e_1, \dots, e_{|\hspace{-0.3mm} \mem \hspace{-0.3mm} |}} \ecdf \hspace{-0.5mm} (\hspace{-0.3mm} \model_t\hspace{-0.3mm} ) $.
\end{prop}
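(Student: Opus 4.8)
The plan is to build on the uniform quantile-estimation bound developed for PDG in \cite{eastwood2022probable}, with $|\mem|$ playing the role of the number of i.i.d.\ environment draws stored in memory and with $\text{Bias}(\model_t,\erisk)$ absorbing the systematic gap between the statistical-risk CDF $\scdf$ and the population CDF of the empirical risks. Throughout I would use the elementary monotonicity fact that for CDFs $F,G$ and $\eta\ge 0$, $\sup_x\big(G(x)-F(x)\big)\le\eta$ implies $F^{-1}(\alpha-\eta)\le G^{-1}(\alpha)$, which converts uniform CDF control into one-sided quantile control. The first step is a deterministic reduction: writing $\bar{F}_t(\cdot;\model_t)\coloneqq\E_{\env_1,\dots,\env_{|\mem|}}\ecdf(\cdot;\model_t)$ for the population version of the estimated CDF, the uniform-in-$\gamma$ bound encoded in $\text{Bias}(\model_t,\erisk)$ between $\scdf$ and $\bar{F}_t$ yields $\scdf^{-1}\big(\alpha-\text{Bias}(\model_t,\erisk)\big)\le\bar{F}_t^{-1}(\alpha;\model_t)$ pointwise in $\model_t$; it therefore remains to bound $\Prob\big[\sup_{\model_t\in\Theta}\big(\bar{F}_t^{-1}(\alpha;\model_t)-\ecdf^{-1}(\alpha;\model_t)\big)>\epsilon\big]$, and by the monotonicity fact (combined with a mild density lower bound near the $\alpha$-quantile, under which the quantile map is Lipschitz in the CDF) this reduces to controlling the uniform deviation $\sup_{\model_t\in\Theta}\sup_\gamma\big|\bar{F}_t(\gamma;\model_t)-\ecdf(\gamma;\model_t)\big|$ at a scale proportional to $\epsilon$.

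Second, I would fix $\model_t$. Because the stored environments are independent draws $\env_\tau\sim\mathcal{Q}$, the sampled risks $\{\erisk^{\env_\tau}(\model_t)\}_{\tau=1}^{|\mem|}$ are i.i.d., so $\ecdf(\cdot;\model_t)$ is a genuine empirical CDF over $|\mem|$ observations with mean $\bar{F}_t(\cdot;\model_t)$; the Dvoretzky--Kiefer--Wolfowitz inequality then gives $\Prob\big[\sup_\gamma\big|\bar{F}_t(\gamma;\model_t)-\ecdf(\gamma;\model_t)\big|>t\big]\le 2e^{-2|\mem|t^{2}}$ for every $t>0$.

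Third comes the uniformization over $\Theta$, where the covering number enters. I would map each $\model_t$ to its induced estimated risk distribution, cover the space $\mathcal{\hat{F}}_t$ of such distributions by an $(\epsilon/16)$-net of cardinality $\mathcal{N}_{\epsilon/16}(\mathcal{\hat{F}}_t)$ in the metric underlying the covering number, apply the fixed-$\model_t$ bound of the previous step at each net point with a deviation of order $\epsilon$, and take a union bound; this produces the prefactor $\mathcal{N}_{\epsilon/16}(\mathcal{\hat{F}}_t)$ and an exponent of the form $-c|\mem|\epsilon^{2}$. The remaining ingredient is a stability estimate showing that if two models induce risk distributions within $\epsilon/16$ in that metric, then their empirical and population CDFs --- hence the relevant quantiles --- differ by at most a constant multiple of $\epsilon/16$. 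Splitting the target slack $\epsilon$ among the net resolution, the discretization-induced CDF perturbation, and the DKW scale $t$, and bookkeeping the conversions between distribution distance, CDF distance and quantile distance, collapses the radius to $\epsilon/16$ and the rate to $\exp(-|\mem|\epsilon^{2}/16)$, with the $\mathcal{O}(\cdot)$ hiding the DKW constant and the absolute factors from the net geometry.

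The step I expect to be the main obstacle is the third one, and within it the stability of the quantile map under the metric defining $\mathcal{N}_{\epsilon/16}(\mathcal{\hat{F}}_t)$: quantile functions are only Lipschitz in the CDF when the underlying density is bounded away from zero near the $\alpha$-quantile, so one needs a mild regularity assumption on the induced risk distributions --- analogous in spirit to the strong-convexity/Lipschitz assumptions of Theorem~\ref{thm1} --- to rule out the degenerate case in which a tiny CDF perturbation moves the quantile macroscopically. A secondary point to pin down is the i.i.d.\ structure underlying the DKW step: in the streaming protocol the memory contents, the arrival order, and the learning trajectory are correlated, so one either conditions on the stored batches and assumes the environments are drawn i.i.d.\ from $\mathcal{Q}$, or replaces DKW by a bounded-differences (McDiarmid) argument applied directly to $\sup_\gamma\big|\bar{F}_t(\gamma;\model_t)-\ecdf(\gamma;\model_t)\big|$, at the cost of somewhat larger constants than the stated $16$.
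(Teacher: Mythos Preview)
Your proposal is correct and follows essentially the same approach as the paper: the paper's proof simply invokes \cite[Theorem~1]{eastwood2022probable} with the environments stored in $\mem$ playing the role of the i.i.d.\ training environments, and your sketch is precisely a reconstruction of that theorem's proof (bias reduction, DKW at a fixed model, then covering-number union bound). Your discussion of the quantile-Lipschitz regularity and the i.i.d.\ caveat for the memory contents are valid technical points that the paper does not spell out, but they do not change the route.
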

\begin{proof}
	The complete proof is omitted due to the space limitation. The proposition can be proven by leveraging \cite[Theorem 1]{eastwood2022probable} and using the memory $\mem$ instead of the already given data samples.
\end{proof}
We can see that both the bias term and the upper bound are a decreasing function of the memory size $|\mem|$. As we have more information about the environments in $\mem$, the model $\model_t$ becomes more robust to the dynamic changes of the environments, thereby improving generalization.

From Theorem \ref{thm1} and Proposition \ref{prop1}, we can see that a tradeoff exists between memorization and generalization in terms of $|\mem|$. As $|\mem|$ increases, we have more knowledge about the entire environment $\envwhole$, and $\model_t$ can be more robust to changes in $\env_t$. However, maintaining the knowledge of all the stored environments in $\mem$ becomes more difficult. Hence, the deviation of $\model_t$ from $\model^*_{\mem}$ will increase.  Essentially, this observation is similar to the performance of FedAvg algorithm \cite{mcmahan2017communication} in \ac{FL}. As the number of participating clients increases, the global model achieves better generalization. However, it does not perform very well on each dataset of clients. The global model usually moves toward just the average of each client's optima instead of the true optimum \cite{karimireddy2020scaffold}.
\section{Experiments}
\label{sec:experiments}
We now conduct experiments to evaluate our proposed algorithm and to validate the analysis. We use the rotated MNIST datasets \cite{lopez2017gradient}, where digits are rotated with a fixed degree. Here, the rotation degrees represent an environmental change inducing distributional shifts to the inputs. The training datasets are rotated by $0^\circ$ to $150^\circ$ by interval of $25^\circ$. Hence, the agent classifies all MNIST digits for seven different rotations, i.e., environments. To measure the generalization performance, we left out one target degree from the training datasets. Unless specified otherwise, we use a two-hidden fully-connected layers with 100 ReLU units, a \ac{SGD} optimizer and a learning rate of 0.1. For the memory, we adopted a replay buffer in \cite{rolnick2019experience} with reservoir sampling with $\memsize = 10000$. We also set $|\batch |= 512$ and $|\mbatch| = 64$. To estimate $\scdf$, we used Gaussian estimation by sampling three batches of size $|\rbatch| =64$ from the memory with $\alpha = 0.9999$ and $\balance = 0.5.$ We average our results over ten random seeds

\begin{table*}[t!] 
\begin{tabular}{ccccccccc}
	\hline
	\multirow{2}{*}{\textbf{Method}} &
	\multicolumn{2}{c}{\textbf{0$^\circ$}} &
	\multicolumn{2}{c}{50$^\circ$} &
	\multicolumn{2}{c}{100$^\circ$} &
	\multicolumn{2}{c}{150$^\circ$} \\
	&
	Avg &
	Target &
	Avg &
	Target &
	Avg &
	Target &
	Avg &
	Target \\ \hline
	\textbf{Ours} &
	\textbf{86.79$\pm$0.7} &
	\textbf{60.54$\pm$0.7} &
	\textbf{89.07$\pm$0.2} &
	\textbf{81.27$\pm$0.5} &
	\textbf{89.53$\pm$0.4} &
	\textbf{81.46$\pm$0.7} &
	\textbf{87.08$\pm$0.2} &
	\textbf{60.12$\pm$1.3} \\
	ER\cite{rolnick2019experience} &
	85.68$\pm0.5$ &
	58.43$\pm0.5$ &
	88.72$\pm0.2$ &
	80.92$\pm0.6$ &
	88.63$\pm0.3$ &
	80.58$\pm0.7$ &
	84.82$\pm0.4$ &
	54.11$\pm1.0$ \\
	DER\cite{buzzega2020dark} &
	81.58$\pm0.4$ &
	52.24$\pm0.7$ &
	84.02$\pm0.5$ &
	75.62$\pm0.5$ &
	83.74$\pm0.6$ &
	75.36$\pm1.1$ &
	80.79$\pm0.7$ &
	49.6$\pm1.3$ \\
	DER++\cite{buzzega2020dark} &
	83.71$\pm0.8$ &
	55.50$\pm1.1$ &
	86.34$\pm0.8$ &
	78.04$\pm0.9$ &
	86.38$\pm0.6$ &
	78.43$\pm0.5$ &
	83.24$\pm0.8$ &
	52.70$\pm1.0$ \\
	EWC\_ON\cite{schwarz2018progress} &
	83.10$\pm0.4$ &
	54.42$\pm0.8$ &
	85.63$\pm0.4$ &
	77.23$\pm0.1$ &
	85.52$\pm0.6$ &
	77.01$\pm1.0$ &
	82.66$\pm0.2$ &
	51.38$\pm1.4$ \\
	GSS\cite{aljundi2019gradient} &
	83.23$\pm0.4$ &
	54.72$\pm0.8$ &
	85.35$\pm0.6$ &
	77.10$\pm0.5$ &
	85.55$\pm0.5$ &
	76.97$\pm1.0$ &
	82.60$\pm0.3$ &
	51.05$\pm1.2$ \\
	HAL\cite{chaudhry2021using} &
	83.15$\pm0.3$ &
	54.52$\pm1.0$ &
	85.37$\pm0.6$ &
	77.45$\pm0.6$ &
	85.75$\pm0.3$ &
	77.25$\pm0.6$ &
	82.69$\pm0.3$ &
	52.52$\pm0.4$ \\ \hline
\end{tabular}
\vspace{-0.0cm}
\caption{Accuracy on the held-out test datasets and average accuracy across all rotations.}
\label{tab1}
\end{table*}

In Table \ref{tab1}, we compare our algorithm against five memory-based \ac{CL} methods (ER \cite{rolnick2019experience}, DER\cite{buzzega2020dark}, DER++\cite{buzzega2020dark}, , GSS \cite{aljundi2019gradient}, and HAL \cite{chaudhry2021using}) and one regularization-based \ac{CL} method (EWC-ON \cite{schwarz2018progress}). We present results only on four rotations due to space limitations. In Table \ref{tab1}, \lq Avg\rq means average accuracy on test datasets across all rotations. \lq Target\rq  measures the generalization performance on unseen datasets. We can see that our algorithm outperforms the baselines in terms of both the generalization performance and average accuracy across all rotations. Hence, our algorithm achieves better generalization while not forgetting the knowledge of past environments. We can also observe that our algorithm achieves robust generalization to challenging rotations ($0^\circ$ and $150^\circ$).
\begin{figure*}[t!]
	\centering	
	\begin{subfigure}[t]{0.42\textwidth}
		\centering	
		\includegraphics[width=\textwidth]{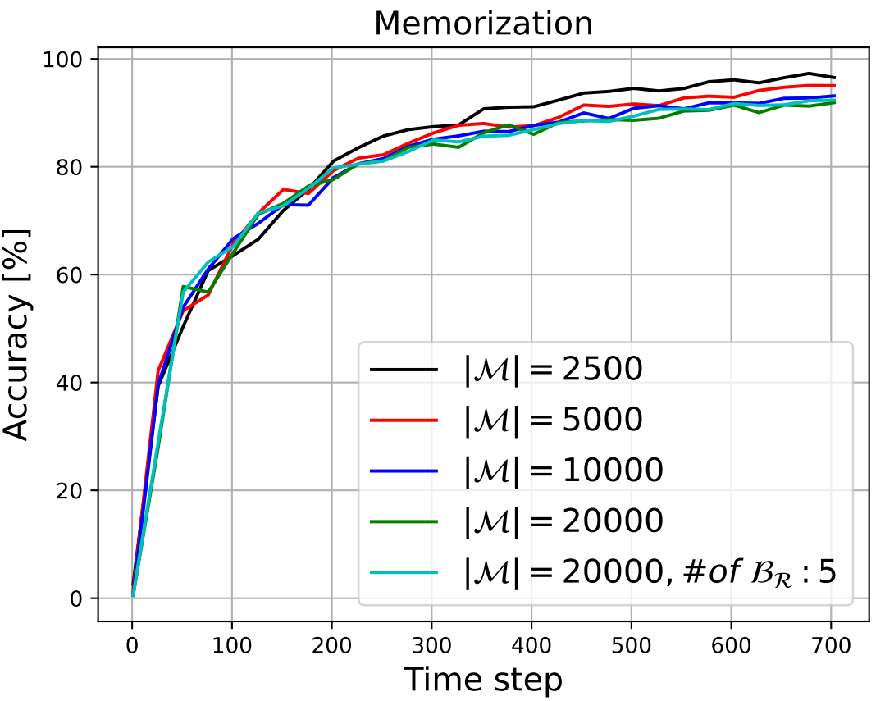}
		\label{fig:mem}
	\end{subfigure} \hfill
	\begin{subfigure}[t]{0.42\textwidth}
		\centering
		\includegraphics[width=\textwidth]{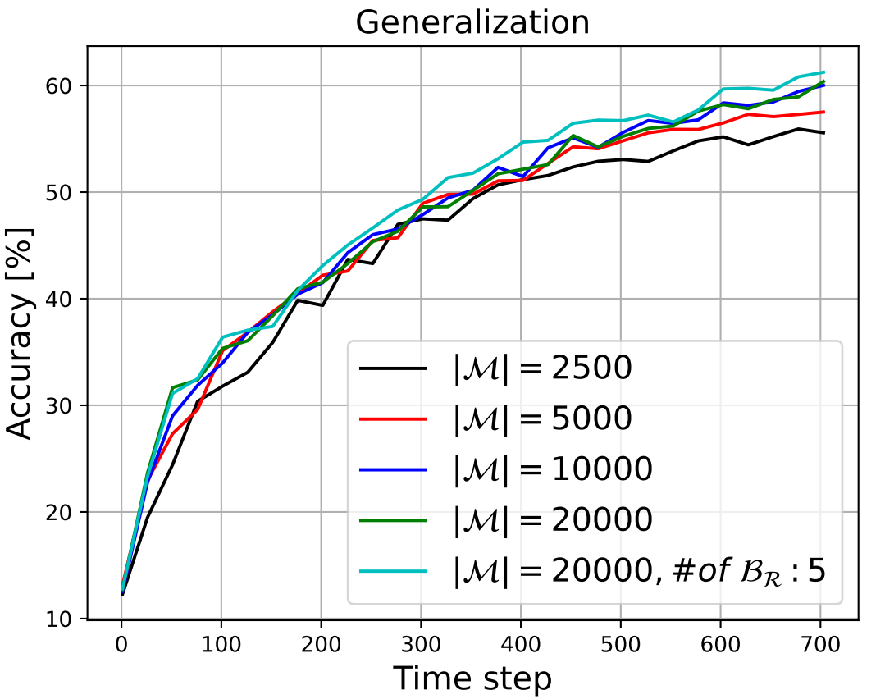}
		\label{fig:gen}
	\end{subfigure}\hfill
	\vspace{-0.3cm}
	\caption{Impact of the memory size on the memorization and generalization.}
	\vspace{-0.0cm}
	\label{fig:impact}
	\vspace{-0.3cm}
\end{figure*}

\begin{table}[t!]
	\centering
\begin{tabular}{ccc}
	\hline
	\multirow{2}{*}{\textbf{$\alpha$}} & \multicolumn{2}{c}{\textbf{$150^\circ$}} \\
	& Avg             & Target         \\ \hline
	\multicolumn{1}{l}{0.99999}        & 86.44$\pm0.6$   & 59.20$\pm1.3$  \\
	\textbf{0.9999}                             & \textbf{87.09$\pm$0.2}   & \textbf{60.12$\pm$1.3}  \\
	0.99                               & 86.61$\pm0.4$   & 58.78$\pm1.2$  \\
	0.9                                & 85.61$\pm0.3$   & 57.76$\pm0.7$  \\
	0.5                                & 83.54$\pm0.6$   & 54.17$\pm0.8$  \\
	0.3                                & 81.55$\pm0.9$   & 51.52$\pm1.0$  \\ \hline
\end{tabular}
\caption{Impact of $\alpha$ on the performance.}
\label{tab2}
\vspace{-0.3cm}
\end{table}
In Table \ref{tab2}, we show the impact of $\alpha$ on the performance of our algorithm. We left out the $150^\circ$ datasets to measure the generalization. We can observe that as $\alpha$ decreases, the trained model does not generalize well. This is because $\alpha$ represents a probabilistic guarantee of the robustness to environments as shown in \eqref{third_prob}. However, too large $\alpha$ can lead to overly-conservative models that cannot adapt to dynamic changes.

In Fig. \ref{fig:impact}, we present the impact of the memory size $\memsize$ on the memorization and generalization. We left out the $150^\circ$ datasets to measure the generalization. For the memorization, we measured the accuracy of $\model_t$ on whole data samples in $\mem$ at time step $t$ as done in \cite{cai2021online}. As the memory size $\memsize$ increases, a trained model generalizes better while struggling with memorization. This observation corroborates our analysis in Sec. \ref{sec:theory} as well as empirical findings in \cite{cai2021online}. However, for $\memsize =20000$, we can see that the generalization performance does not improve. This is because we solved problem \eqref{fourth_prob} through approximation by sampling batches from the memory $\mem$. Hence, if we do not sample more batches $\rbatch$ with large $\memsize$, we cannot fully leverage various environmental information in $\mem$. We can observe that the generalization improves by sampling more $\rbatch$ batches for the distribution estimation.

\section{Conclusion}
\label{sec:con}

In this paper, we have developed a novel \ac{CL} framework that provides robust generalization to dynamic environments while maintaining past experiences. We have utilized a memory to memorize past knowledge and achieve domain generalization with a high probability guarantee. We have also presented new theoretical insights into the impact of the memory size on the memorization and generalization performance. The experimental results show that our framework can achieve robust generalization to unseen target environments during training while retaining past experiences.

\newpage{}
\bibliographystyle{IEEEbib}
\bibliography{Bibtex/StringDefinitions,Bibtex/IEEEabrv,Bibtex/mybib}

\end{document}